\documentclass[accepted]{uai2026} 
                        
\usepackage[american]{babel}

\usepackage{amsmath}
\usepackage{amssymb}
\usepackage{amsfonts}
\usepackage{booktabs}
\usepackage{amsthm}
\usepackage{thmtools}
\usepackage{thm-restate}
\usepackage{enumitem}
\usepackage{multirow}

\usepackage{amsthm,amsmath,amssymb}
\usepackage{thmtools}

\declaretheoremstyle[
  headfont=\bfseries,
  bodyfont=\normalfont,
  spaceabove=4pt,
  spacebelow=4pt,
  headpunct=.,
  postheadspace=0.5em
]{tight}

\newcommand{\Ghat}{\widehat{G}}
\newcommand{\fg}{\mathfrak{g}}

\usepackage{tabularx}
\usepackage{natbib} 
\usepackage{mathtools} 
\usepackage{booktabs} 
\usepackage{tikz} 
\usetikzlibrary{arrows.meta,positioning,calc,fit,backgrounds,shapes.geometric}
\usepackage{pgfplots}
\pgfplotsset{compat=1.18}
\usepackage[table]{xcolor}

\usepackage[most]{tcolorbox}
\tcolorboxenvironment{summarybox}{
  colback=gray!10,
  colframe=gray!40,
  boxrule=0.5pt,
  arc=2pt,
  left=6pt,right=6pt,top=6pt,bottom=6pt
}

\definecolor{headerblue}{HTML}{8090B8}       
\definecolor{augerino}{HTML}{E8F4EC} 
\definecolor{augerino_tab}{HTML}{F2FAF4  } 

\definecolor{ourrow}{HTML}{D6E4F7} 
\definecolor{ourrow_tab}{HTML}{EEF4FC } 
\definecolor{rowA}{HTML}{F2F6FC}             
\definecolor{rowB}{HTML}{FFFFFF}             
\definecolor{headertext}{HTML}{FFFFFF}       
\definecolor{accentgreen}{HTML}{1A6B3C}      
\definecolor{rulecolor}{HTML}{B0C4DE}
\definecolor{rowalt}{HTML}{F7F7F7}
\definecolor{highlightblue}{rgb}{0.9, 0.95, 1.0}
\definecolor{highlightgray}{rgb}{0.95, 0.95, 0.95}

\title{Spectral Discovery of Continuous Symmetries via Generalized Fourier Transforms}

\author[1]{Pavan ~Karjol}
\author[1]{Kumar ~Shubham}
\author[1]{Prathosh ~AP}
\affil[1]{%
    \centering
    Department of Electrical Communication Engineering \\
    Indian Institute of Science \\
    Bengaluru, Karnataka
}

\usepackage{amsmath,amsfonts,bm}

\def\eqref#1{equation~\ref{#1}}

\def\1{\bm{1}}

\def\vf{{\bm{f}}}

\DeclareMathAlphabet{\mathsfit}{\encodingdefault}{\sfdefault}{m}{sl}
\SetMathAlphabet{\mathsfit}{bold}{\encodingdefault}{\sfdefault}{bx}{n}

\def\gD{{\mathcal{D}}}

\def\gH{{\mathcal{H}}}

\def\gX{{\mathcal{X}}}

\def\sC{{\mathbb{C}}}

\def\sR{{\mathbb{R}}}

\def\sT{{\mathbb{T}}}

\def\sZ{{\mathbb{Z}}}

\newcommand{\R}{\mathbb{R}}

\begin{document}
\maketitle







\begin{abstract}

Continuous symmetries are fundamental to many scientific and learning problems, yet they are often unknown a priori. 
Existing symmetry discovery approaches typically search directly in the space of transformation generators or rely on learned augmentation schemes. 
We propose a fundamentally different perspective based on spectral structure.

We introduce a framework for discovering continuous one-parameter subgroups using the Generalized Fourier Transform (GFT). 
Our central observation is that invariance to a subgroup induces structured sparsity in the spectral decomposition of a function across irreducible representations. 
Instead of optimizing over generators, we detect symmetries by identifying this induced sparsity pattern in the spectral domain.

We develop symmetry detection procedures on maximal tori, where the GFT reduces to multi-dimensional Fourier analysis through their irreducible representations. 
Across structured tasks, including the double pendulum and top quark tagging, we demonstrate that spectral sparsity reliably reveals  one-parameter symmetries. 
These results position spectral analysis as a principled and interpretable alternative to generator-based symmetry discovery.

\end{abstract}

\section{Introduction}

Symmetry is a fundamental structural principle in machine learning. When a target function is invariant or equivariant under a group of transformations, incorporating this structure into the model can substantially improve generalization, reduce sample complexity, and enhance interpretability. 
Translation equivariance in convolutional neural networks~\cite{Cohen2016,kondor2018generalization,cohen2019general} 
and permutation invariance in Deep Sets~\cite{Zaheer2017} 
are canonical examples in which explicit knowledge of the underlying symmetry group 
enables the design of parameter-efficient models with improved generalization performance. However, these approaches rely on explicit knowledge of the underlying symmetry group, an assumption that often does not hold in practical settings.

In many scientific and real-world problems, the relevant symmetry is not known a priori. Such situations commonly arise in physical simulations, molecular modeling, and dynamical systems, where symmetry is implicit rather than directly observable. Automatically identifying the active symmetry from data, therefore, remains a central challenge. To make this challenge more tractable, one can focus on structured families of symmetries that admit analytic characterization.

Among continuous symmetries, one-parameter subgroups occupy a particularly important role. 
They capture structured rotational or continuous transformation behaviors while remaining analytically tractable, and higher-dimensional subgroups can often be understood as compositions of these fundamental components. 
Discovering such one-parameter subgroups provides a natural entry point for understanding more complex transformation structure.

In this work, we propose a spectral framework for discovering continuous symmetries based on the Generalized Fourier Transform (GFT). 
Our central observation is that invariance to a one-parameter subgroup of $SO(n)$ induces structured sparsity in the spectral decomposition of a function restricted to that subgroup. 
Rather than searching directly in the space of generators, we analyze how a learned function decomposes across irreducible representations in the frequency domain. 
Spectral concentration patterns then serve as signatures of invariance.

To build intuition, consider a function on the plane that is invariant to all rotations.
In angular Fourier coordinates, this full rotational invariance forces all nonzero angular frequencies to vanish, leaving spectral mass only at the zero mode.
More generally, invariance to a one-parameter subgroup imposes a weaker constraint: rather than collapsing entirely to the zero frequency, the spectrum is restricted to a structured subset of admissible frequencies determined by the subgroup. Our approach detects symmetry by identifying this induced spectral sparsity pattern.

We develop symmetry detection procedures on maximal tori, where the GFT reduces to a multi-dimensional Fourier transform with directly interpretable frequency structure. 
This setting allows us \textcolor{black}{to translate representation-theoretic constraints into concrete spectral patterns that can be detected computationally.
}
\paragraph{Why spectral rather than generator search?}

Recent approaches to symmetry discovery typically parameterize candidate generators and optimize over transformation actions in the input space. 
Methods such as Augerino \cite{benton2020learning} learn augmentation distributions that promote invariance, while LieGAN \cite{yang2023liegan} learns continuous transformation groups through adversarial objectives. 
These approaches explicitly model transformations and enforce invariance via data-level perturbations or learned group actions.

In contrast, our approach detects symmetry through spectral structure. 
Instead of optimizing over candidate generators, we examine the frequency content of a learned function and identify invariance via structured sparsity across irreducible representations. 
\textcolor{black}{This yields an interpretable and representation-theoretic criterion for symmetry discovery that avoids direct optimization over transformation parameters.}

To our knowledge, prior symmetry discovery methods have not explicitly leveraged representation-level sparsity patterns in the Generalized Fourier Transform domain as a primary mechanism for identifying continuous subgroups.

We integrate this spectral reasoning into the learning pipeline in two complementary ways:
(i) a model architecture that incorporates maximal-torus Fourier features, and
(ii) a regularization mechanism based on the resonance constraint (Corollary~\ref{cor:torus_resonance}), which encourages functions consistent with subgroup invariance by penalizing off-resonant Fourier modes and thereby promoting sparsity in the spectral coefficients.
Empirically, we demonstrate that spectral sparsity reliably reveals one-parameter subgroups and that incorporating this structure improves learning performance across multiple tasks.

\subsection{Contributions}

Our main contributions are as follows:
\begin{itemize}

\item \textbf{A spectral framework for continuous symmetry discovery.}  
We propose a Generalized Fourier Transform (GFT)-based methodology for identifying continuous one-parameter subgroups through structured spectral sparsity. 
On maximal tori, where the GFT reduces to multi-dimensional Fourier analysis, invariance manifests as interpretable concentration patterns across irreducible components.

\item \textbf{Spectral architectures for symmetry-aware learning.}  
We incorporate spectral reasoning directly into the model design through architectures built on maximal-torus Fourier features.  
These architectures expose interpretable frequency structure and enable symmetry discovery to emerge jointly with predictive learning.

\item \textbf{Resonance-based regularization for invariant structure.}  
We introduce a regularization mechanism derived from the resonance constraint (Corollary~\ref{cor:torus_resonance}) that penalizes off-resonant Fourier modes and induces structured spectral sparsity.  
This provides a principled, representation-theoretic bias toward functions consistent with one-parameter subgroup invariance.

\item \textbf{Comprehensive empirical evaluation.}  
We validate the proposed framework across diverse synthetic and structured tasks, demonstrating that spectral sparsity reliably reveals latent one-parameter subgroups and improves generalization when incorporated into the learning process.

\end{itemize}

\section{Related Work}

 \subsection{Symmetries and Neural Network}
Symmetries and invariances arise naturally across physical, chemical, and biological systems, where they often determine fundamental structural and dynamical properties. Recent advances in areas such as drug discovery~\cite{stark2022equibind,corso2022diffdock,nguyen2025equicpi} and subatomic particle physics~\cite{gross1996role,hwang2010symmetries,gutowski2007symmetry} have demonstrated how explicitly leveraging symmetry can guide scientific reasoning 
and enable new discoveries. 

In deep learning, symmetry principles have become central to geometric deep learning~\cite{bronstein2021geometric}. For example, the success of convolutional neural networks (CNNs) in image processing~\cite{lecun1998gradient,sanborn2022bispectral} can largely be attributed to their built-in equivariance to two-dimensional translations~\cite{sanborn2022bispectral}. Building on this idea, recent works have incorporated broader group symmetries into neural architectures, including $G$-equivariant neural networks~\cite{cohen2016group}, steerable CNNs~\cite{weiler2019general,weiler2018learning}, and Lie group based constructions such as LieConv~\cite{finzi2020generalizing} and $SO(3)$-equivariant models~\cite{kondor2018clebsch}. These approaches demonstrate that encoding symmetry can significantly enhance representation efficiency and generalization. However, most of the existing methods assume explicit knowledge of the underlying symmetry group. 

In many scientific and real-world problems, the symmetry structure is implicit and not known a priori~\cite{yang2023latent,desai2022symmetry}, 
limiting the applicability of such approaches. This gap motivates the development of methods 
that can discover and exploit latent symmetries directly from data.    
\subsection{Automatic symmetry discovery}
Automatic symmetry discovery seeks to infer latent group structure directly from data symmetries~\cite{sanborn2022bispectral}. Early approaches parameterize Lie algebra generators to model continuous transformations~\cite{rao1998learning, chau2020disentangling, cohen2014learning}, but the recovered structure is often implicit and difficult to interpret. \citet{sanborn2022bispectral} employ spectral methods, though their framework is limited to discrete subgroups. Other methods, including LieGAN~\cite{yang2023liegan} and LieGG~\cite{moskalev2022liegg}, learn generators adversarially or extract them post hoc from trained networks, typically separating symmetry discovery from the predictive objective. Augerino~\cite{benton2020learning} enables joint learning of generators and predictors, yet the resulting invariance or equivariance of the model remains non-interpretable.

In contrast, our approach unifies prediction and symmetry discovery within a single training framework while ensuring that the predictor’s invariance is explicitly characterized. By embedding spectral analysis directly into learning, we obtain a transparent description of the underlying continuous group, so that both the discovered symmetry and the model’s invariance structure are analytically interpretable and aligned with the predictive task.

\section{Problem Formulation}
\begin{figure*}[htb!]
    \centering
    \includegraphics[width=0.8\linewidth, clip]{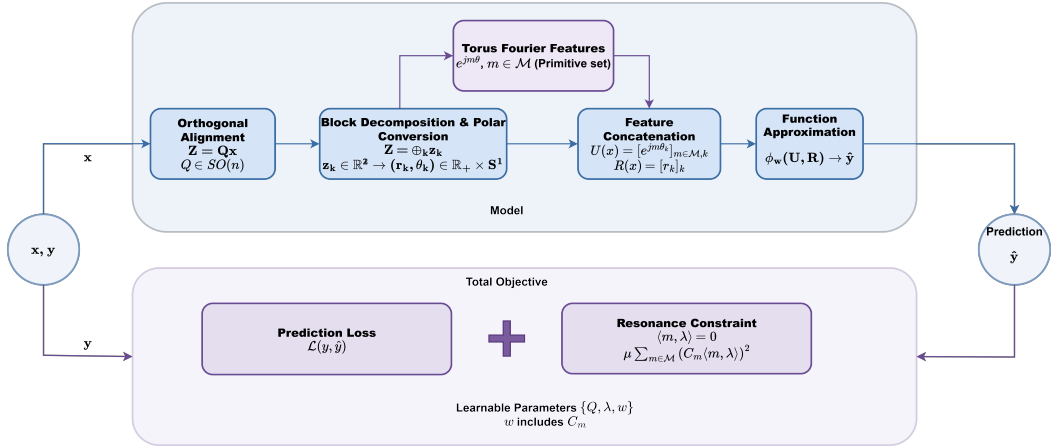}
    \caption{\textbf{Symmetry discovery and learning framework.}
    The input is first aligned via a learnable orthogonal transformation
    \(Q\in SO(n)\) and decomposed into two-dimensional blocks, which are
    converted to polar coordinates to obtain radii and torus angles.
    Primitive torus Fourier features \(U(x)\), together with radial features
    \(R(x)\), are fed to the predictor \(\phi_w\).
    Training minimizes the prediction loss along with a spectral resonance
    regularizer that promotes support only on frequency directions satisfying
    \(\langle m,\lambda\rangle = 0\), thereby enabling recovery of the latent
    one-parameter symmetry generator.}
    \label{fig:symmetry_block_diagram}
\end{figure*}






Let the dataset be 
\(
\mathcal{D} = \{(x_i, y_i)\}_{i=1}^{T},
\)
where each input \(x_i \in \mathbb{R}^n\), target \(y_i \in \mathbb{R}^m\) and \(SO(n)\) denote the special orthogonal group.

Assume the data are generated by an underlying function
\(
\vf : \mathcal{X} \rightarrow \mathbb{R}^m,
\)
so that \(y_i = \vf(x_i)\) for all \(i\), where the domain 
\(\gX \subset \sR^n\) is closed under the action of \(SO(n)\). Further, suppose that \(\vf\) is invariant under an unknown one-parameter subgroup 
\(\gH \subset SO(n)\), i.e.,
\begin{equation}
    \vf(h \cdot x) = \vf(x),
\quad \forall\, h \in \gH, \; x \in \gX. \nonumber
\label{eq:f-invariance}
\end{equation}

Given access only to the training dataset \(\gD\), 
the primary objectives of our work are:
\begin{enumerate}
    \item To learn a predictor  that accurately approximates 
    the true target function \(\vf\),
    \item To identify the unknown one-parameter subgroup \(\gH \subset SO(n)\) 
    under which \(\vf\) is invariant.
\end{enumerate}

To formalize the second objective, we briefly review the structure of one-parameter subgroups of \(SO(n)\).

\section{Preliminaries}

\subsection{One-Parameter Subgroups of \texorpdfstring{$SO(n)$}{SO(n)}}

A one-parameter subgroup of $SO(n)$ is a continuous family of rotations
\[
\gH = \{ \exp(tB) : t \in \mathbb{R} \},
\]
where $B \in \mathfrak{so}(n)$. The Lie algebra of $SO(n)$ is
\[
\mathfrak{so}(n)
=
\{ B \in \mathbb{R}^{n \times n} \mid B^\top = -B \},
\]
the space of skew-symmetric matrices.

Every $B \in \mathfrak{so}(n)$ generates a one-parameter subgroup via the matrix exponential, and conversely every one-parameter subgroup of $SO(n)$ arises in this way. Hence, learning a one-parameter subgroup of $SO(n)$ is equivalent to identifying its skew-symmetric generator.

\subsection{Canonical Structure of Skew-Symmetric Generators}

Every $B \in \mathfrak{so}(n)$ admits a real orthogonal block decomposition. 
There exists $Q \in SO(n)$ such that
\begin{equation}
Q^\top B Q
=
\bigoplus_{k=1}^{\lfloor n/2 \rfloor} \lambda_k J
\;\oplus\; \mathbf{0}_{\,n \bmod 2},
\qquad
J =
\begin{bmatrix}
0 & -1 \\
1 & \phantom{-}0
\end{bmatrix},
\label{eq:real-canonical}
\end{equation}
where $\lambda_k \in \mathbb{R}$.

Thus, any skew-symmetric generator decomposes into independent planar rotation generators acting on mutually orthogonal two-dimensional invariant subspaces. 
If $n$ is even, the decomposition consists entirely of $2\times2$ rotation blocks; if $n$ is odd, an additional one-dimensional trivial block appears. 
For simplicity, we assume $n$ is even.

Under this decomposition,
\[
\exp(tB)
=
Q
\left[
\bigoplus_{k=1}^{n/2}
R(\lambda_k t)
\right]
Q^\top,
\]
where $R(\theta)$ denotes a planar rotation matrix.


\noindent
\colorbox{gray!15}{
\parbox{0.97\linewidth}{
\textbf{Key Idea.}
Learning $B \;\Longleftrightarrow\;$ 
(i) learning an orthogonal matrix $Q$ that defines the invariant planes, and 
(ii) learning rotation rates $\lambda_1,\dots,\lambda_{n/2}$ governing the dynamics within each plane.
}}

\section{Methodology}

Building on the canonical structure described in the preliminaries, our objective is to jointly learn the target function $\vf$ and the latent one-parameter subgroup generated by $B$. Rather than optimizing directly over arbitrary skew-symmetric matrices, we parameterize the generator through its real canonical decomposition 
\[
B = Q \left( \bigoplus_{k=1}^{n/2} \lambda_k J \right) Q^\top,
\]
where the orthogonal matrix $Q$ determines the orientation of invariant two-dimensional planes and the parameters $\{\lambda_k\}$ specify the angular velocities within each plane. During training, $Q$ and $\{\lambda_k\}$ are learned end-to-end together with the predictor. Given an input $x$, we first transform it via $z = Q^\top x$, aligning the coordinates with the invariant planes so that the subgroup action decomposes into independent planar rotations.

Our key observation is that invariance under the subgroup generated by $B$ induces specific structure in the generalized Fourier transform (GFT) of the function expressed in this aligned domain. In particular, symmetry leads to spectral sparsity and resonance conditions among admissible frequencies determined by $\{\lambda_k\}$. Motivated by this structure, we construct Fourier features in the transformed domain and feed them into a function approximator $\phi$ such as neural network. To encourage recovery of the underlying symmetry, we introduce regularization that promotes sparsity and resonance-consistent frequency components. As training progresses, dominant spectral modes emerge whose structure characterizes the latent generator, thereby enabling simultaneous discovery of the symmetry parameters and efficient learning of the target function. Please refer to Figure~\ref{fig:symmetry_block_diagram} for further details about the architecture.

\medskip

\noindent
\colorbox{gray!15}{
\parbox{0.97\linewidth}{
\textbf{Key Idea.}
Continuous symmetries manifest as structured sparsity and resonance constraints in an appropriate Fourier domain. 
By aligning inputs with invariant planes through a learnable transformation, constructing Fourier features in this aligned basis, and enforcing sparsity and resonance-consistent structure in the spectral coefficients, we recover the latent generator from data. 
The dominant frequency components encode the underlying symmetry, while the resulting symmetry-adapted representation facilitates more stable and efficient function learning.
}}

\subsection{Spectral Characterization of Invariance}
To analyze the symmetry behavior of any function $f$ under different group action we first lift the function to the group $G < SO(n)$ such that for any fixed probe data $x_0 \in \mathbb{R}^n$ 
\[
F(g) := f(g\cdot x_0)
\]
Intuitively, for any given group transformation, the function $F$ captures the change in the target function for any given probe vector ($x_0$). 
If  $f$ is invariant under one-parameter subgroup $\gH < SO(n)$, then F(g) is also invariant under same group action i.e.,
\[
 F(h \cdot g) = F(\exp(tB) \cdot g) = F(g), \quad \forall h \in \gH, g \in G
\]

Therefore, symmetry of $f$ translates into left-invariance of $F$ along the subgroup direction. 
This invariance imposes strong structural constraints on the spectral decomposition of $F$, which we now analyze using the Group Fourier Transform.

\subsubsection{Group Fourier Transform}


Let $G$ be a compact Lie group equipped with normalized Haar measure $dg$. 
For an irreducible unitary representation 
$\rho_\pi : G \to \mathbb{C}^{d_\pi \times d_\pi}$, 
the Group Fourier Transform of $F \in L^2(G)$ at $\pi$ is defined as
\begin{equation}
\label{eq:gft_definition}
\widehat{F}(\pi)
=
\int_G F(g)\, \rho_\pi(g)^\ast \, dg,
\end{equation}
where $\widehat{F}(\pi) \in \mathbb{C}^{d_\pi \times d_\pi}$ denotes the matrix of Fourier coefficients associated with the representation $\pi$.

The collection $\{\widehat{F}(\pi)\}_{\pi \in \widehat{G}}$ provides a complete spectral description of $F$. 
Crucially, group invariance properties of $F$ translate into algebraic constraints on these matrix-valued Fourier coefficients. 
In particular, left-invariance along a one-parameter subgroup generated by $B$ imposes infinitesimal conditions involving the derived representation $d\rho_\pi(B)$.

The following proposition makes this relationship precise and forms the basis of our spectral characterization.

\begin{restatable}[Spectral characterization of left-invariance along $\exp(tB)$]{proposition}{propSpectralLeftInv}
\label{prop:spectral_left_invariance}
Let $G$ be a compact Lie group with normalized Haar measure $dg$ and Lie algebra $\fg$.
Fix $B\in\fg$ and define left-translation $(L_tF)(g):=F(\exp(tB)g)$.
For $F\in L^2(G)$ and an irreducible unitary representation
$\rho_\pi:G\to \sC ^{d_\pi\times d_\pi}$, define
\[
\widehat{F}(\pi):=\int_G F(g)\,\rho_\pi(g)^\ast\,dg
\quad \in \sC^{d_\pi\times d_\pi}.
\]
If $F$ is left-invariant along $\exp(tB)$, i.e.\ $L_tF=F$ for all $t\in\R$, then for every $\pi\in\Ghat$,
\begin{equation}
\label{eq:left_infinitesimal_condition_main}
\widehat{F}(\pi)\,d\rho_\pi(B)^\ast = 0.
\end{equation}
In particular, if $d\rho_\pi(B)$ is invertible then $\widehat{F}(\pi)=0$.
\end{restatable}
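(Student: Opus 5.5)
The plan is to first derive a finite (group-level) identity for the Fourier coefficients of $L_tF$, and then differentiate it at $t=0$. The starting point is left-invariance of Haar measure on the compact group $G$. Substituting $g' = \exp(tB)g$, so that $g = \exp(-tB)g'$, gives
\[
\widehat{L_tF}(\pi)=\int_G F(\exp(tB)g)\,\rho_\pi(g)^\ast\,dg=\int_G F(g')\,\rho_\pi(\exp(-tB)g')^\ast\,dg'.
\]
Since $\rho_\pi$ is a homomorphism, $\rho_\pi(\exp(-tB)g')^\ast=\rho_\pi(g')^\ast\rho_\pi(\exp(-tB))^\ast$. It follows that
\[
\widehat{L_tF}(\pi)=\widehat{F}(\pi)\,\rho_\pi(\exp(-tB))^\ast=\widehat{F}(\pi)\,\rho_\pi(\exp(tB)).
\]
The last equality uses unitarity: $\rho_\pi(h^{-1})^\ast=\rho_\pi(h)$. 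Now impose the hypothesis $L_tF=F$ in $L^2(G)$ for all $t$. This yields the exact identity
\[
\widehat{F}(\pi)\bigl(\rho_\pi(\exp(tB))-I\bigr)=0 \quad\text{for all } t\in\R .
\]
A convenient feature is that no regularity of $F$ is needed here, because all of the $t$-dependence has been moved onto the smooth finite-dimensional matrix function $t\mapsto\rho_\pi(\exp(tB))$.

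Next, differentiate at $t=0$. Since $\rho_\pi$ is a continuous homomorphism of Lie groups, it is smooth, and $t\mapsto\rho_\pi(\exp(tB))=\exp(t\,d\rho_\pi(B))$ is a one-parameter matrix group with derivative $d\rho_\pi(B)$ at $0$. Dividing the identity by $t$ and letting $t\to0$ gives $\widehat{F}(\pi)\,d\rho_\pi(B)=0$.

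To match the stated form, note that unitarity of $\rho_\pi$ forces $d\rho_\pi(B)$ to be skew-Hermitian, so $d\rho_\pi(B)^\ast=-d\rho_\pi(B)$. Hence $\widehat{F}(\pi)\,d\rho_\pi(B)^\ast=0$ as well. The final claim is then immediate: if $d\rho_\pi(B)$ is invertible, so is its adjoint, and right-multiplying by $(d\rho_\pi(B)^\ast)^{-1}$ gives $\widehat{F}(\pi)=0$.

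The only step needing care is the bookkeeping in the change of variables, namely keeping track of which side the factor $\rho_\pi(\exp(\pm tB))$ lands on and with which sign after taking adjoints. I will verify this carefully against the paper's convention $\rho_\pi(g)^\ast$ in \eqref{eq:gft_definition}. The sign ultimately does not matter, since both $d\rho_\pi(B)$ and its adjoint annihilate $\widehat{F}(\pi)$ on the right, but the side matters. The analytic content is minimal, because the hypothesis is used only through its consequence on the finite-dimensional coefficients.
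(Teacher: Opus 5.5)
Your proposal is correct and follows the paper's proof essentially step for step. Both derive $\widehat{L_tF}(\pi)=\widehat{F}(\pi)\,\rho_\pi(\exp(tB))$ from left-invariance of Haar measure and unitarity, impose $L_tF=F$, differentiate at $t=0$, pass to $d\rho_\pi(B)^\ast$ via skew-Hermitianity, and right-multiply by the inverse in the invertible case.
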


\noindent\textit{Interpretation.} ~\eqref{eq:left_infinitesimal_condition_main}
forces $\widehat{F}(\pi)$ to lie in the nullspace of $d\rho_\pi(B)^\ast$.

\subsubsection{GFT on the Maximal Torus}

While Proposition~\ref{prop:spectral_left_invariance} holds for a general compact Lie group $G$, in our analysis we specialize to the maximal torus of $SO(n)$. 
The maximal torus is a compact, abelian subgroup consisting of simultaneous planar rotations in mutually orthogonal invariant planes. 
This choice is particularly convenient because its irreducible representations are one-dimensional characters, leading to a simple and computationally efficient Fourier analysis. 
Importantly, any one-parameter subgroup of $SO(n)$, after alignment to its invariant planes via an appropriate orientation matrix $Q$, lies inside a maximal torus. 
Thus, performing the GFT on the maximal torus captures the spectral structure induced by any one-parameter subgroup, while significantly simplifying both the representation theory and the resulting computational framework.

We now specialize the general spectral constraint in Proposition~\ref{prop:spectral_left_invariance} to a choice of $G$ that yields a particularly simple and computationally efficient Fourier analysis. Concretely, we take $G$ to be a maximal torus of $SO(n)$ (with $n$ even for simplicity). After aligning coordinates to the invariant planes of the generator via an orientation matrix $Q$, the corresponding one-parameter subgroup $\{\exp(tB)\}$ acts as independent rotations within each $2$-dimensional plane and therefore lies inside a maximal torus. Since maximal tori are compact and abelian, all irreducible representations are one-dimensional characters, so the GFT reduces to a standard Fourier series on $\mathbb{T}^{n/2}$. This retains the spectral signature of any one-parameter subgroup (up to the learned orientation) while greatly simplifying the representation-theoretic machinery.

\paragraph{Torus parametrization and characters.}
Let $r:=n/2$. A maximal torus of $SO(n)$ can be identified (after a suitable choice of basis) with
\[
\mathbb{T}^r \cong (S^1)^r,
\qquad
\theta=(\theta_1,\dots,\theta_r)\in[0,2\pi)^r,
\]
where $\theta_k$ is the rotation angle in the $k$th invariant plane. The irreducible unitary representations  of $\mathbb{T}^r$ are the characters indexed by $m\in\mathbb{Z}^r$ ($\rho_\pi=\chi_m$):
\[
\chi_m(\theta) := e^{i\langle m,\theta\rangle}
= \exp\!\Big(i\sum_{k=1}^r m_k \theta_k\Big).
\]
Accordingly, the GFT coefficients of $F$ become scalars $\widehat{F}(m)\in\mathbb{C}$.

\begin{restatable}[Resonance condition on the maximal torus]{corollary}{corTorusResonance}
\label{cor:torus_resonance}
Assume $n$ is even and write the generator in canonical form
\[
B = Q\Big(\bigoplus_{k=1}^{r}\lambda_k J\Big)Q^\top,
\qquad r=n/2,
\]
with $\lambda=(\lambda_1,\dots,\lambda_r)\in\mathbb{R}^r$. 
Consider the maximal torus $\mathbb{T}^r$ in the aligned coordinates and let 
$\widehat{F}(m)$ denote the scalar Fourier coefficient of $F$ at character 
$\chi_m$, $m\in\mathbb{Z}^r$.

If $F$ is left-invariant along $\exp(tB)$, then
\begin{equation}
\label{eq:torus_resonance_main}
\widehat{F}(m)\,\langle m,\lambda\rangle = 0,
\qquad \forall m\in\mathbb{Z}^r.
\end{equation}
In particular, if $\langle m,\lambda\rangle \neq 0$ then $\widehat{F}(m)=0$.
Equivalently, nonzero spectral mass can occur only on the resonant set
\begin{equation}
  \mathcal{R}(\lambda) =  \{m\in\mathbb{Z}^r : \langle m,\lambda\rangle = 0\}.
  \label{eq:resonance-set}
\end{equation}
\end{restatable}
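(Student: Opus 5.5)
The plan is to specialize Proposition~\ref{prop:spectral_left_invariance} to $G=\mathbb{T}^r$ and compute the derived representation of each character explicitly. Work in the aligned coordinates $z=Q^\top x$. There the generator becomes $B':=Q^\top B Q=\bigoplus_{k=1}^r \lambda_k J$, and left-invariance of $F$ along $\exp(tB)$ becomes left-invariance along $\exp(tB')$. First I will note that $\exp(tB')=\bigoplus_k R(\lambda_k t)$. Under the identification of the maximal torus with $\mathbb{T}^r$, this is the curve $t\mapsto t\lambda=(\lambda_1 t,\dots,\lambda_r t)$ (angles mod $2\pi$). So $B'$ corresponds to the vector $\lambda\in\mathbb{R}^r\cong\mathrm{Lie}(\mathbb{T}^r)$, and $\exp(tB')$ lies in the torus. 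This makes the torus a legitimate choice of $G$ in the proposition.

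Next I compute $d\chi_m(B')$. By definition,
\[
d\chi_m(B')=\frac{d}{dt}\Big|_{t=0}\chi_m(t\lambda)=\frac{d}{dt}\Big|_{t=0}e^{it\langle m,\lambda\rangle}=i\langle m,\lambda\rangle ,
\]
a $1\times1$ matrix. Its adjoint is $-i\langle m,\lambda\rangle$. Substituting into \eqref{eq:left_infinitesimal_condition_main} gives $-i\,\widehat F(m)\langle m,\lambda\rangle=0$, which is \eqref{eq:torus_resonance_main}. If $\langle m,\lambda\rangle\neq0$, then $d\chi_m(B')$ is a nonzero scalar and hence invertible, so the final clause of the proposition gives $\widehat F(m)=0$. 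The support statement $\mathrm{supp}\,\widehat F\subseteq\mathcal R(\lambda)$ is just a restatement of this.

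As a sanity check, and as a self-contained alternative, I will also verify the result directly. Translation invariance of Haar measure on the abelian group gives
\[
\widehat F(m)=\int F(\theta+t\lambda)e^{-i\langle m,\theta\rangle}d\theta=e^{it\langle m,\lambda\rangle}\widehat F(m)
\]
for all $t$. Hence $\widehat F(m)\big(e^{it\langle m,\lambda\rangle}-1\big)=0$, and choosing $t$ small and nonzero forces $\widehat F(m)=0$ whenever $\langle m,\lambda\rangle\neq0$. This route avoids differentiability issues for $F\in L^2$.

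The main obstacle is the bookkeeping in the identification, not the calculation itself. Two points need care. First, the orientation $Q$ must be absorbed consistently: either define $F$ on the conjugated torus $Q\mathbb{T}^rQ^\top$, or pass to the aligned function. Second, the torus coordinate $\theta_k$ must match the block $R(\lambda_k t)$ with the correct sign convention for $J$. A sign flip would replace $\lambda$ by $-\lambda$, which leaves $\mathcal R(\lambda)$ unchanged, so the conclusion is robust to this convention.
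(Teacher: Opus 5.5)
Your main argument is the same as the paper's: specialize Proposition~\ref{prop:spectral_left_invariance} to $G=\mathbb{T}^r$, read off $d\chi_m(B)=i\langle m,\lambda\rangle$ from $\chi_m(\exp(tB))=e^{it\langle m,\lambda\rangle}$, and substitute. Your explicit conjugation by $Q$ makes precise a step the paper leaves implicit. Your direct Haar-translation check is also correct, but it avoids no real differentiability issue: the paper only differentiates $t\mapsto \widehat{F}(\pi)\rho_\pi(\exp(tB))$, which is smooth for every $F\in L^2$.
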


\noindent
\colorbox{gray!15}{
\parbox{0.97\linewidth}{
\textbf{Key Insight.}
Left-invariance along $\exp(tB)$ forces the Fourier spectrum to concentrate only on frequencies $m$ satisfying the linear resonance condition $\langle m,\lambda\rangle=0$. 
Thus, symmetry reduces the effective spectral support to a lower-dimensional subset, and the surviving dominant frequencies directly encode the generator parameters $\lambda$ (and, via alignment, the invariant planes).
}}

\begin{table*}[htb!]
\centering
\small
\setlength{\arrayrulewidth}{0.4pt}
\caption{\textbf{Comparison of symmetry discovery frameworks.}
We compare \emph{Spectral Discovery} (ours) with Augerino~\cite{benton2020learning},
LieGAN~\cite{yang2023liegan}, LieGG~\cite{moskalev2022liegg},
InfGen~\cite{ko2024learning}, BeyondAffine~\cite{shaw2024symmetry},
and Bispectral Neural Networks (BNNs)~\cite{sanborn2022bispectral}.
Methods are evaluated along four axes:
(i) whether symmetry discovery and prediction are learned in a \emph{joint training framework},
(ii) whether the resulting predictor is explicitly invariant in an interpretable manner,
(iii) the class of symmetry groups supported,
and (iv) whether the method is used as a baseline in this work.}
\label{tab:comparison}
\vspace{0.2cm}
\begin{tabular}{@{}lcccc@{}}
\toprule
{\color{black}\textbf{Method}}
  & {\color{black}\textbf{Joint Framework}}
  & {\color{black}\textbf{Interpretable Predictor}}
  & {\color{black}\textbf{Scope}}
  & {\color{black}\textbf{Baseline}} \\
\midrule

\rowcolor{ourrow}
\textbf{Ours: Spectral Discovery} & \textbf{Yes} & \textbf{Yes} & $1$D Lie groups & -- \\

\rowcolor{augerino} Augerino                          & \textbf{Yes} & No           & Generic Lie groups & \checkmark \\
\rowcolor{rowalt}
LieGAN                            & No (discovery only)      & No & Generic Lie groups & -- \\
LieGG                             & No (post-hoc discovery)  & No & Generic Lie groups & -- \\
\rowcolor{rowalt}
InfGen                            & No (post-hoc discovery)  & No & Generic Lie groups & -- \\
BeyondAffine                      & No (post-hoc discovery)  & No & $1$D Lie groups (incl.\ non-affine) & -- \\
\rowcolor{rowalt}
BNNs                              & \textbf{Yes}             & \textbf{Yes} & Discrete groups & -- \\

\bottomrule
\end{tabular}
\end{table*}
\subsection{Symmetry Discovery and Learning Framework}


The resonance condition in Corollary~\ref{cor:torus_resonance} provides the central principle behind our symmetry discovery framework. 
It shows that left-invariance along $\exp(tB)$ restricts the Fourier spectrum to frequencies satisfying the linear constraint $\langle m,\lambda\rangle = 0$. 
However, this characterization holds only when the generator is expressed in its canonical block-diagonal form, i.e., when the coordinate system is aligned with the invariant rotational planes. 
In general, these planes are unknown, and therefore the resonance condition alone reveals information about $\lambda$ only after an appropriate alignment.

\paragraph{Learning invariant planes via orthogonal alignment.}
To uncover the latent symmetry structure, we introduce a learnable orthogonal transformation 
$Q \in \mathbb{R}^{n \times n}$ that aligns the input space with the invariant planes of the generator. 
Since $B \in \mathfrak{so}(n)$ admits a block-diagonal decomposition into $2 \times 2$ rotational blocks, learning $Q$ allows us to represent the generator in this canonical form. 
We therefore transform the input features as
\begin{equation}
\label{eq:feature_transform}
Z(x) := Qx \in \mathbb{R}^n,
\end{equation}
where $n$ is assumed even for simplicity.

The transformed representation $Z(x)$ is decomposed into $n/2$ two-dimensional blocks:
\begin{equation}
\label{eq:block_decomposition}
Z(x) 
=
\bigoplus_{k=1}^{n/2} z_k,
\qquad
z_k = (z_{2k-1}, z_{2k})^\top \in \mathbb{R}^2,
\end{equation}
where $\bigoplus$ denotes concatenation of blocks. 
Each block $z_k$ corresponds to a candidate invariant rotational plane.

By learning the orthogonal matrix $Q$ jointly with the model parameters, we effectively search for a coordinate system in which the latent one-parameter subgroup acts as independent planar rotations. 
In this aligned domain, the resonance condition derived earlier becomes directly applicable for identifying the generator parameters.

\paragraph{Torus Fourier Features.}

Each two-dimensional block $z_k = (z_{2k-1}, z_{2k})^\top$ can be equivalently represented as a complex number
\[
u_k := z_{2k-1} + i z_{2k} \in \mathbb{C}.
\]
Under the aligned subgroup action, each block undergoes planar rotation,
\(
u_k \;\mapsto\; e^{i\lambda_k t} u_k,
\)
so that the action becomes diagonal in the complex representation.

Writing
\(
u_k = r_k e^{i\theta_k},
r_k = |u_k|,
\)
we obtain angular variables $\theta = (\theta_1,\dots,\theta_{n/2})$ that parametrize a maximal torus.

Motivated by the GFT analysis on this torus, we construct Fourier features indexed by integer frequency vectors 
$m = (m_1,\dots,m_{n/2}) \in \mathbb{Z}^{n/2}$:
\begin{equation}
\label{eq:fourier_feature}
\widehat{Z}_m(x)
=
\exp \big(i \langle m,\theta(x)\rangle\big)
=
\exp \left(
i \sum_{k=1}^{n/2} m_k \theta_k(x)
\right). \nonumber
\end{equation}
These correspond exactly to the characters of the maximal torus and form the spectral basis in which the resonance condition 
\[
\langle m,\lambda\rangle = 0
\]
arises naturally.

\paragraph{Bandwidth Truncation and Fundamental Frequencies.}

In practice, we restrict attention to a finite bandwidth parameter $\mathcal{B}$ such that
\[
m_k \in \{-\mathcal{B}, \dots, \mathcal{B}\},
\]
which yields $(2\mathcal{B}+1)^{n/2}$ candidate frequency components. 
However, many of these frequencies lie along the same lattice direction and therefore encode redundant symmetry information.

\paragraph{Primitive Directions and Frequency Rays.}

The index set $\mathbb{Z}^{n/2}$ forms an integer lattice. 
Each nonzero index $m$ determines a one-dimensional frequency direction, since all integer multiples of $m$ correspond to harmonics of the same angular combination.

To isolate the fundamental direction associated with $m$, we define its \emph{primitive representative}:
\begin{equation}
\label{eq:primitive_direction}
\mathrm{prim}(m)
=
\frac{m}{\gcd(|m_1|,\dots,|m_{n/2}|)}.
\end{equation}
By construction, $\mathrm{prim}(m)$ has coprime entries and is the minimal integer vector in the same direction as $m$.

We therefore retain only the set of primitive frequency directions:
\begin{equation}
\label{eq:primitive_set}
\mathcal{M}
=
\left\{
\mathrm{prim}(m)
\;:\;
m_k \in \{-\mathcal{B},\dots,\mathcal{B}\},\;
m \neq 0
\right\}.
\end{equation}

\paragraph{Model Architecture and Objective.}

Let $U(x) = \{\widehat{Z}_m(x)\}_{m \in \mathcal{M}}$ denote the collection of primitive Fourier features and let 
$R(x) = (r_1,\dots,r_{n/2})$ denote the radii of the aligned 2D blocks. 
We concatenate these features and feed them to a function approximator
\[
\phi_w : (U(x), R(x)) \mapsto \widehat{y},
\]
parameterized by weights $w$.

In addition to learning $w$ and the alignment matrix $Q$, 
we learn the generator parameters $\lambda$ by enforcing the resonance condition derived in Corollary~\ref{cor:torus_resonance}. 
Recall that symmetry requires nonzero Fourier coefficients only when $\langle m,\lambda\rangle = 0$. 
To promote this structure, we penalize violations of resonance weighted by the corresponding feature coefficients.

Let $C_m$ denote the first-layer weight in $\phi_w$ associated with feature $\widehat{Z}_m$. 
The overall objective is

\noindent
\colorbox{gray!15}{
\parbox{0.97\linewidth}{
\begin{equation}
\label{eq:final_objective}
\min_{w,\,\lambda,\,Q}
\;
\mathcal{L}\!\left(y, \phi_w(U(x), R(x))\right)
\;+\;
\mu
\sum_{m \in \mathcal{M}}
\left(
C_m \,\langle m,\lambda\rangle
\right)^2,
\end{equation}
}}

where $\mu > 0$ controls the strength of the resonance regularization.

The first term ensures predictive accuracy, while the second term encourages spectral support to concentrate on resonant frequency directions. 
After training, the learned parameters $(Q,\lambda)$ determine the estimated Lie algebra generator
\[
B = Q \left( \bigoplus_{k=1}^{n/2} \lambda_k J \right) Q^\top,
\]
and $\phi_w$ provides the final prediction.

\section{Experiments}
\paragraph{Objective.}
We evaluate the proposed spectral discovery framework for jointly learning a predictive mapping and recovering the latent one-parameter continuous symmetry underlying the data. Our emphasis is on accurately identifying both the symmetry orientation and the subgroup direction while maintaining strong predictive performance.

\paragraph{Benchmarks.}
We consider two representative tasks spanning synthetic physical dynamics and real-world data. 
First, we evaluate on a 6D double pendulum system with spring coupling, where the target mapping is invariant under a latent one-parameter rotation acting simultaneously across multiple 2D planes. This task assesses recovery of a shared subgroup direction in the presence of nuisance coordinates. 
Second, we consider the Top Quark Tagging classification task using jet constituent four-momenta. The underlying physical laws admit Lorentz symmetries, and we focus on discovering a rotational one-parameter subgroup while learning the classification mapping. Ablations on  noise robustness (Figure~\ref{fig:noise_sweep}) and sample efficiency (Figure~\ref{fig:sample_sweep}) are provided in Appendix~\ref{sup_sec:sensitivity_analysis}.

\paragraph{Baseline.}
We compare against \textbf{Augerino}~\citep{benton2020learning}, which jointly learns a predictive model and a distribution over continuous transformations to encourage invariance during training (see Table~\ref{tab:comparison} for further details). This provides the closest comparison in a joint supervised learning and symmetry discovery setting. Complete training details are provided in Appendix~\ref{sup_sec:training_protocol}. Comparison with additional baselines is provided in Appendix~\ref{sup_sec:comp_lieGAN}.

\begin{figure*}[htb!]
        \centering
    \includegraphics[width=\textwidth, trim={0, 0, 0, 0}, clip]{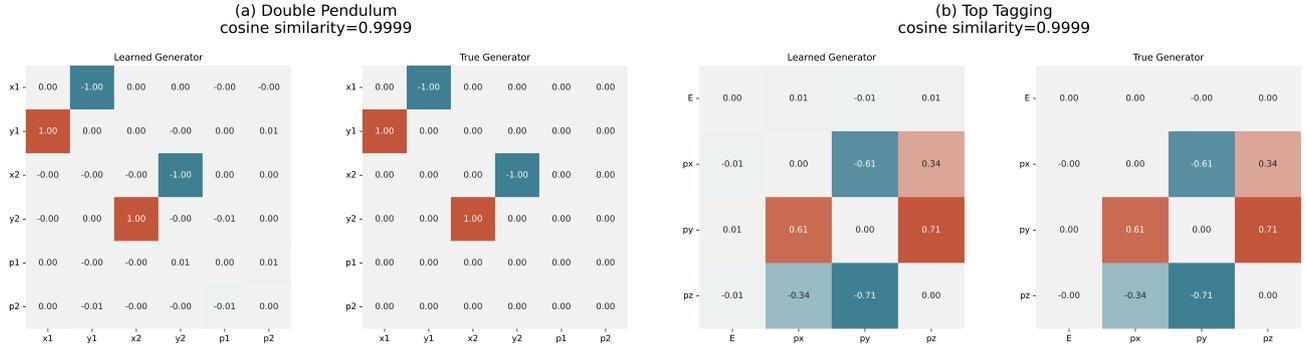}
    \caption{
\textbf{Recovered vs. True Rotational Generators.} Comparison of the learned (left) and ground-truth (right) generators for \textbf{(a)} the Double Pendulum system (diagonal $\Delta(SO(2))$ generator) and \textbf{(b)} the Top Tagging task. Both learned generators demonstrate near-perfect alignment with the physical ground truth (cosine similarity $= 0.9999$). 
    }
    \label{fig:double_pendulum_top_tagging_generator}
\end{figure*}


\begin{table*}[t]
\centering
\small
\caption{
\textbf{Double Pendulum (6D), 32K samples.}
Comparison of predictive accuracy and symmetry recovery.
Results are reported as mean $\pm$ standard deviation.
Lower is better for Test MSE and Invariance Error; higher is better for Cosine Similarity.
}
\label{tab:double_pendulum}
\begin{tabular}{lccc}
\toprule
\textbf{Method} 
& \textbf{Test MSE} $\downarrow$ 
& \textbf{Inv.\ Error} $\downarrow$ 
& \textbf{Cosine Similarity} $\uparrow$ \\
\midrule
\rowcolor{ourrow_tab} \textbf{Spectral Discovery} 
& \textbf{0.00298 $\pm$ 0.00024}
& \textbf{0.00070 $\pm$ 0.00027} 
& \textbf{0.9999 $\pm$ 0.00001} \\

\rowcolor{augerino_tab} Augerino
& $0.01053 \pm 0.00040$ 
& $0.00232 \pm 0.00025$
& $0.9233 \pm 0.0708$ \\
\bottomrule
\end{tabular}
\end{table*}

\begin{table}[t]
\small
\centering
\caption{
\textbf{Top Quark Tagging (Elliptic Regime).}
Classification accuracy and generator recovery.
Results are reported as mean $\pm$ standard deviation.
Higher is better for both metrics.
}
\label{tab:top_tagging_results}
\begin{tabular}{lcc}
\toprule
\textbf{Method} 
& \textbf{Accuracy (\%)} $\uparrow$ 
& \textbf{Cosine Similarity} $\uparrow$ \\
\midrule
\rowcolor{ourrow_tab} \textbf{Spectral Discovery}
& \textbf{84.87 $\pm$ 0.39} 
& \textbf{0.9999 $\pm$ 0.00002} \\

\rowcolor{augerino_tab} Augerino
& $74.9 \pm 2.4$ 
& $0.994 \pm 0.003$ \\

\bottomrule
\end{tabular}
\end{table}

\label{sec:experiments}

\paragraph{Evaluation Metrics}
\label{sec:metrics}
We evaluate performance using three metrics: (M1) \emph{Test MSE}, computed on a held-out test set; (M2) \emph{Invariance Error}, (M3) Cosine similarity between the learned and true generators. Further details about  evaluation metrics are provided in the Appendix ~\ref{sup_sec:metrics}.





\subsection{Results}

\paragraph{Double Pendulum (6D).}
On the double pendulum benchmark (Figure~\ref{fig:double_pendulum_top_tagging_generator} and Table~\ref{tab:double_pendulum}), Spectral Discovery achieves substantially stronger symmetry recovery while maintaining competitive predictive accuracy. The learned generator aligns nearly perfectly with the ground-truth rotational symmetry, with significantly lower invariance error than Augerino. While both methods fit the regression mapping, augmentation-based invariance does not reliably recover the true infinitesimal generator, whereas spectral discovery yields stable and interpretable generator identification.

\paragraph{Top Quark Tagging.}
On the Top Quark Tagging task (Table~\ref{tab:top_tagging_results}), Spectral Discovery improves classification accuracy while consistently recovering the underlying rotational subgroup, as evidenced by high generator alignment. These results demonstrate that explicit symmetry discovery in the spectral domain can jointly enhance predictive performance and structural interpretability in real-world high-energy physics settings.

\section{Discussion}
\label{subsec:connection_emlp}


An important observation emerging from our analysis is the structural similarity between our formulation and Equivariant Multi-Layer Perceptrons (EMLP)~\cite{finzi2021practical}. For a known symmetry group, EMLP enforces equivariance via the fixed-point condition $\rho(g)v = v$, which for continuous groups reduces to the Lie algebra constraint $d\rho(A)v = 0$ for all $A \in \mathfrak{g}$. 
To implement this, EMLP constructs equivariant linear layers by vectorizing the weights as $v = \mathrm{vec}(W)$ in a tensor-product representation space  (e.g., $V_{\mathrm{out}} \otimes V_{\mathrm{in}}^*$) 
and obtaining equivariant solutions as elements of the nullspace of the induced 
Lie algebra action. In our framework, each Fourier coefficient 
\[
\widehat F(\pi)=\int_G F(g)\rho_\pi(g)^*\,dg
\in \operatorname{End}(V_\pi)\cong V_\pi\otimes V_\pi^*
\]
resides in the same type of tensor-product space. The infinitesimal constraint 
\(
\widehat F(\pi)\, d\rho_\pi(B)^* = 0
\)
imposes an identical Lie-algebra nullspace condition. Thus, both approaches reduce symmetry to nullspace constraints in representation space: EMLP constrains vectorized weights, while our method constrains Fourier blocks. When a layer maps $V_\pi \to V_\pi$, its weight lies in $\operatorname{End}(V_\pi)$, 
the same space as the corresponding GFT coefficient, making the latter one of the valid solutions of the weight.
\paragraph{Limitations and Future Work}
Our framework currently targets one-parameter subgroups of $SO(n)$ and is optimized for recovering a single dominant generator. Extending it to jointly discover higher-dimensional or multiple commuting subgroups remains future work. Moreover, the analysis is performed on the maximal torus of $SO(n)$, leveraging its simple irreducible representations for efficiency. Incorporating broader compact-group representations beyond the maximal torus would expand the range of discoverable symmetries.

\bibliography{uai2026-template}

\newpage

\onecolumn

\title{Spectral Discovery of Continuous Symmetries via Generalized Fourier Transforms\\(Supplementary Material)}
\maketitle

\appendix

\begin{table}[htbp!]
\centering
\small
\setlength{\tabcolsep}{6pt}
\renewcommand{\arraystretch}{1.15}
\caption{\textbf{Notation used in the method.}}
\label{tab:notation}
\begin{tabular}{@{}p{0.28\linewidth} p{0.66\linewidth}@{}}
\toprule
\textbf{Symbol} & \textbf{Meaning} \\
\midrule
$\gD=\{(x_i,y_i)\}_{i=1}^{T}$ & Training dataset of $T$ input--target pairs. \\
$T$ & Number of training samples. \\
$x_i \in \R^n$ & Input feature vector for sample $i$. \\
$y_i \in \R^m$ & Target/output vector for sample $i$. \\
$n,m$ & Input and output dimensions, respectively. \\
$\gX \subset \R^n$ & Input domain (assumed closed under the action of $SO(n)$). \\
$\vf:\gX \to \R^m$ & Unknown ground-truth target function generating $y=\vf(x)$. \\
$SO(n)$ & Special orthogonal group (rotations in $\R^n$). \\
$\mathfrak{so}(n)$ & Lie algebra of $SO(n)$: skew-symmetric matrices $B^\top=-B$. \\
$\gH \subset SO(n)$ & Unknown one-parameter subgroup under which $\vf$ is invariant. \\
$h \cdot x$ & Group action of $h \in SO(n)$ on $x \in \R^n$ (rotation). \\
$\gH=\{\exp(tB):t\in\R\}$ & One-parameter subgroup generated by $B\in\mathfrak{so}(n)$. \\
$B$ & Skew-symmetric Lie-algebra generator of the latent symmetry. \\
$\exp(tB)$ & Matrix exponential mapping generator $B$ to a group element in $SO(n)$. \\
$Q\in SO(n)$ & Learnable orthogonal alignment matrix (orients invariant $2$D planes). \\
$J=\begin{bmatrix}0&-1\\ 1&0\end{bmatrix}$ & Canonical $2\times 2$ rotation generator. \\
$r=n/2$ & Number of invariant $2$D planes (assuming $n$ even). \\
$\lambda=(\lambda_1,\dots,\lambda_r)\in\R^r$ & Rotation rates / angular velocities in each invariant plane. \\
$B = Q\big(\bigoplus_{k=1}^r \lambda_k J\big)Q^\top$ & Real canonical decomposition/parameterization of the generator. \\
$z = Q^\top x$ or $Z(x)=Qx$ & Aligned coordinates after orthogonal transformation (notation varies in text). \\
$Z(x)=\bigoplus_{k=1}^{r} z_k$ & Block decomposition of aligned coordinates into $r$ two-dimensional blocks. \\
$z_k=(z_{2k-1},z_{2k})^\top\in\R^2$ & The $k$-th $2$D block (candidate invariant plane coordinates). \\
$u_k=z_{2k-1}+ i z_{2k}\in\sC$ & Complex representation of the $k$-th $2$D block. \\
$u_k=r_k e^{i\theta_k}$ & Polar form: radius $r_k$ and angle $\theta_k$. \\
$r_k=|u_k|$ & Radius (magnitude) of block $k$. \\
$\theta_k \in [0,2\pi)$ & Torus angle (phase) in block $k$. \\
$\theta=(\theta_1,\dots,\theta_r)$ & Vector of torus angles; parameterizes $\sT^r\cong(S^1)^r$. \\
$\sT^r$ & Maximal torus in aligned coordinates (product of $r$ circles). \\
$m=(m_1,\dots,m_r)\in\sZ^r$ & Integer frequency (character index) on the torus. \\
$\langle m,\theta\rangle$ & Standard inner product $\sum_{k=1}^r m_k\theta_k$. \\
$\chi_m(\theta)=e^{i\langle m,\theta\rangle}$ & Torus character / Fourier basis function. \\
$\widehat{Z}_m(x)=\exp(i\langle m,\theta(x)\rangle)$ & Torus Fourier feature indexed by $m$. \\
$\mathcal{B}$ & Bandwidth truncation; restricts $m_k\in\{-\mathcal{B},\dots,\mathcal{B}\}$. \\
$\mathrm{prim}(m)$ & Primitive direction: $m/\gcd(|m_1|,\dots,|m_r|)$. \\
$\mathcal{M}$ & Set of primitive frequency directions retained after bandwidth truncation. \\
$U(x)=\{\widehat{Z}_m(x)\}_{m\in\mathcal{M}}$ & Collection of primitive torus Fourier features. \\
$R(x)=(r_1,\dots,r_r)$ & Vector of radii from aligned $2$D blocks. \\
$\phi_w$ & Predictor / function approximator with parameters (weights) $w$. \\
$\widehat{y}=\phi_w(U(x),R(x))$ & Model prediction. \\
$\mathcal{L}(y,\phi_w(\cdot))$ & Prediction loss (task-dependent). \\
$C_m$ & First-layer weight in $\phi_w$ associated with Fourier feature indexed by $m$. \\
$\mu>0$ & Regularization strength for resonance penalty. \\
$\mathcal{R}(\lambda)=\{m\in\sZ^r:\langle m,\lambda\rangle=0\}$ & Resonant frequency set implied by invariance. \\
$\langle m,\lambda\rangle=0$ & Resonance condition (nonzero spectral mass only on resonant frequencies). \\
\bottomrule
\end{tabular}
\end{table}

\section{Identifiability of the Alignment and Generator Parameters}
\label{sec:identifiability}

Our model parameterizes a one-parameter subgroup generator in canonical form
\begin{equation}
B \;=\; Q \Big(\bigoplus_{k=1}^{r} \lambda_k J \Big) Q^\top,
\qquad
J \;=\;
\begin{pmatrix}
0 & -1\\
1 & 0
\end{pmatrix},
\qquad
r = n/2,
\label{eq:B_canonical}
\end{equation}
where $Q\in SO(n)$ aligns the ambient coordinates with $r$ invariant $2$-planes and
$\lambda=(\lambda_1,\dots,\lambda_r)\in\mathbb{R}^r$ are the rotation rates within each plane.
Since $Q$ and $\lambda$ are learned from data through spectral structure, it is important to clarify
what aspects of $(Q,\lambda)$ are identifiable.

\subsection{Identifiability of $Q$ up to block-wise rotations and permutations}
\label{subsec:Q_ident}

The decomposition in~\eqref{eq:B_canonical} is not unique: each invariant $2$-plane admits an
arbitrary choice of orthonormal basis. Concretely, let
\begin{equation}
S \;=\; \bigoplus_{k=1}^{r} R(\alpha_k),
\qquad
R(\alpha) \;=\;
\begin{pmatrix}
\cos\alpha & -\sin\alpha\\
\sin\alpha & \cos\alpha
\end{pmatrix}
\in SO(2),
\label{eq:block_rotation_S}
\end{equation}
be a block-diagonal change of basis that rotates each $2$-plane independently.
Because $R(\alpha)J R(\alpha)^\top = J$ for all $\alpha$, we have
\begin{equation}
S\Big(\bigoplus_{k=1}^{r}\lambda_k J\Big)S^\top
\;=\;
\bigoplus_{k=1}^{r}\lambda_k J.
\label{eq:S_commutes}
\end{equation}
Therefore, replacing $Q$ with $Q' = QS$ leaves $B$ unchanged:
\begin{equation}
Q'\Big(\bigoplus_{k=1}^{r}\lambda_k J\Big)Q'^\top
\;=\;
Q\Big(\bigoplus_{k=1}^{r}\lambda_k J\Big)Q^\top
\;=\; B.
\label{eq:Q_block_rotation_equiv}
\end{equation}
Hence, $Q$ is identifiable only up to right-multiplication by block-diagonal rotations~\eqref{eq:block_rotation_S}
(i.e., a gauge freedom corresponding to basis choice within each invariant plane).

In addition, if we permute the ordering of the invariant planes, the generator is unchanged up to a corresponding
permutation of $\lambda$. Let $P$ be a $2\times2$-block permutation matrix (i.e., $P$ permutes the $r$ blocks),
so that
\begin{equation}
P\Big(\bigoplus_{k=1}^{r}\lambda_k J\Big)P^\top
\;=\;
\bigoplus_{k=1}^{r}\lambda_{\pi(k)} J
\label{eq:block_perm}
\end{equation}
for some permutation $\pi$ of $\{1,\dots,r\}$. Then $B$ is also invariant under
\begin{equation}
(Q,\lambda) \;\mapsto\; (QP,\,\pi(\lambda)).
\label{eq:perm_equiv}
\end{equation}
Combining the two, the alignment matrix is identifiable only up to the equivalence class
\begin{equation}
Q \;\sim\; QSP,
\label{eq:Q_equiv_class}
\end{equation}
where $S$ is block-diagonal with $SO(2)$ blocks and $P$ is a $2$-block permutation matrix.
Finally, if some rates coincide (e.g., $\lambda_i=\lambda_j$), then additional mixing between the corresponding
$2$-planes may be possible without changing $B$, enlarging the non-identifiability group beyond~\eqref{eq:Q_equiv_class}.
In practice, we interpret successful recovery of $Q$ as recovery of the invariant \emph{subspaces} (planes), rather
than a unique orthonormal basis within each plane.

\subsection{Identifiability of $\lambda$ from surviving (resonant) frequencies}
\label{subsec:lambda_ident}

On the maximal torus in aligned coordinates, invariance along $\exp(tB)$ implies the resonance constraint
\begin{equation}
\widehat{F}(m)\,\langle m,\lambda\rangle = 0,
\qquad m\in\mathbb{Z}^{r},
\label{eq:resonance_constraint}
\end{equation}
so nonzero spectral mass can occur only on frequency indices $m$ satisfying
\begin{equation}
\langle m,\lambda\rangle = 0.
\label{eq:resonant_set}
\end{equation}
Let $\mathcal{M}_{\mathrm{surv}} \subset \mathbb{Z}^{r}$ denote the set of \emph{surviving} (detected) frequency
directions, e.g., those for which the learned spectral coefficients are non-negligible.
Then $\lambda$ must satisfy the homogeneous linear system
\begin{equation}
\langle m,\lambda\rangle = 0, \qquad \forall m\in \mathcal{M}_{\mathrm{surv}}.
\label{eq:lambda_linear_system}
\end{equation}
Stacking constraints yields
\begin{equation}
M\lambda = 0,
\qquad
M \in \mathbb{Z}^{|\mathcal{M}_{\mathrm{surv}}|\times r},
\label{eq:M_matrix}
\end{equation}
where each row of $M$ is $m^\top$ for $m\in\mathcal{M}_{\mathrm{surv}}$.

\paragraph{Existence.}
The system~\eqref{eq:M_matrix} always admits the trivial solution $\lambda=0$; however $\lambda=0$ corresponds to
a degenerate subgroup action. In our setting, we seek nontrivial $\lambda\neq 0$, which exists iff
\begin{equation}
\dim\ker(M) \;\ge\; 1.
\label{eq:existence}
\end{equation}
Equivalently, $\mathrm{rank}(M) \le r-1$. In practice, approximate invariance leads to approximate constraints
$\langle m,\lambda\rangle \approx 0$, and $\lambda$ can be estimated as a (regularized) right-singular vector
associated with the smallest singular value of $M$.

\paragraph{Uniqueness up to scale (and sign).}
Because~\eqref{eq:lambda_linear_system} is homogeneous, if $\lambda$ is feasible then so is $c\lambda$ for any
$c\in\mathbb{R}$, so $\lambda$ can be at best identifiable up to scale (and thus sign).
A standard normalization (e.g., $\|\lambda\|_2=1$) fixes this degree of freedom.

After normalization, $\lambda$ is unique iff the nullspace is one-dimensional:
\begin{equation}
\dim\ker(M) = 1
\quad\Longleftrightarrow\quad
\mathrm{rank}(M) = r-1.
\label{eq:unique_condition}
\end{equation}
This condition depends on the \emph{number} and \emph{linear independence} of surviving frequency directions.
Intuitively, each independent $m$ contributes one linear constraint on $\lambda$; to determine $\lambda$ up to
scale in $\mathbb{R}^r$, we need $r-1$ independent constraints.

\paragraph{Role of the types of surviving frequencies.}
If the surviving set $\mathcal{M}_{\mathrm{surv}}$ lies in a low-dimensional subspace (e.g., many $m$ are collinear
or concentrated on a few coordinates), then $\mathrm{rank}(M)$ may be small and $\ker(M)$ high-dimensional, yielding
non-identifiability. Conversely, a diverse set of linearly independent $m$ vectors tightens the constraints and can
make $\lambda$ identifiable (up to scale). This motivates using \emph{primitive} frequency directions and promoting
sparsity patterns that reveal multiple independent resonant $m$'s.

\paragraph{Practical estimation.}
Given an estimated $\mathcal{M}_{\mathrm{surv}}$, one may recover $\lambda$ by solving
\begin{equation}
\min_{\lambda\in\mathbb{R}^r}\ \|M\lambda\|_2^2
\quad \text{s.t.}\quad \|\lambda\|_2=1,
\label{eq:lambda_estimation}
\end{equation}
which returns the unit-norm right-singular vector of $M$ associated with the smallest singular value. When the
constraints are exact and $\mathrm{rank}(M)=r-1$, this recovers $\lambda$ uniquely up to sign.

\subsection{Summary of identifiability}
Overall, the generator $B$ is identifiable, while $(Q,\lambda)$ are identifiable only up to natural symmetries:
(i) $Q$ up to independent $SO(2)$ rotations within each invariant plane and permutations of the planes, and
(ii) $\lambda$ up to scale/sign, with uniqueness after normalization governed by the rank condition
$\mathrm{rank}(M)=r-1$ determined by the number and linear independence of surviving resonant frequencies.

\paragraph{What ultimately matters.}
While the alignment matrix $Q$ is only identifiable up to block-wise rotations and permutations
(Section~\ref{subsec:Q_ident}), these ambiguities do \emph{not} affect the recovered one-parameter
subgroup. Indeed, all equivalent choices of $(Q,\lambda)$ related by the transformations in
\eqref{eq:Q_equiv_class} and scaling of $\lambda$ induce the \emph{same} Lie algebra generator $B$
in~\eqref{eq:B_canonical} and therefore the same subgroup $\{\exp(tB)\}$.

Consequently, the practically meaningful question is whether the underlying generator direction is
correctly identified. This reduces primarily to identifiability of $\lambda$. If there exists a
nontrivial solution $\lambda\neq 0$ to~\eqref{eq:lambda_linear_system} and the nullspace is
one-dimensional (so $\lambda$ is unique up to scale/sign), then the induced generator
\begin{equation}
B \;=\; Q\Big(\bigoplus_{k=1}^{r}\lambda_k J\Big)Q^\top
\end{equation}
is uniquely determined (up to the inherent gauge symmetries above) and correctly captures the
underlying one-parameter subgroup.

In summary, ambiguities in $Q$ correspond only to basis choices within invariant planes and are
largely irrelevant for symmetry discovery. The central requirement for successful recovery is that
$\lambda$ be identifiable (up to scaling), which in turn depends on the number and linear
independence of the surviving resonant frequency directions.

\section{Proofs}
\propSpectralLeftInv*
\begin{proof}
Fix an irrep $\rho_\pi:G\to \sC^{d_\pi\times d_\pi}$ and write $u_t:=\exp(tB)\in G$.
We first compute how the group Fourier coefficient transforms under left translation.

\paragraph{Step 1: Fourier coefficient of a left translate.}
By definition,
\[
\widehat{L_tF}(\pi)
=\int_G (L_tF)(g)\,\rho_\pi(g)^\ast\,dg
=\int_G F(u_t g)\,\rho_\pi(g)^\ast\,dg.
\]
Use the change of variables $h=u_t g$ (equivalently, $g=u_t^{-1}h$). Since Haar measure is left-invariant, $dg=dh$.
Thus
\begin{align*}
\widehat{L_tF}(\pi)
&=\int_G F(h)\,\rho_\pi(u_t^{-1}h)^\ast\,dh \\
&=\int_G F(h)\,\bigl(\rho_\pi(u_t^{-1})\,\rho_\pi(h)\bigr)^\ast\,dh \\
&=\int_G F(h)\,\rho_\pi(h)^\ast\,\rho_\pi(u_t^{-1})^\ast\,dh \\
&=\widehat{F}(\pi)\,\rho_\pi(u_t^{-1})^\ast.
\end{align*}
Because $\rho_\pi$ is unitary, $\rho_\pi(u_t^{-1})=\rho_\pi(u_t)^\ast$, hence
$\rho_\pi(u_t^{-1})^\ast=\rho_\pi(u_t)$. Therefore,
\begin{equation}
\label{eq:hat_left_translate}
\widehat{L_tF}(\pi)=\widehat{F}(\pi)\,\rho_\pi(u_t)
\qquad \forall\,t\in\R.
\end{equation}

\paragraph{Step 2: Use left-invariance.}
If $F$ is left-invariant along $\exp(tB)$, then $L_tF=F$ for all $t$, hence
$\widehat{L_tF}(\pi)=\widehat{F}(\pi)$ for all $t$. Combining with \eqref{eq:hat_left_translate},
\begin{equation}
\label{eq:fixed_by_one_parameter}
\widehat{F}(\pi)\,\rho_\pi(u_t)=\widehat{F}(\pi)\qquad \forall\,t\in\R.
\end{equation}
Equivalently,
\begin{equation}
\label{eq:annihilate}
\widehat{F}(\pi)\,\bigl(\rho_\pi(u_t)-I\bigr)=0\qquad \forall\,t\in\R.
\end{equation}

\paragraph{Step 3: Differentiate at $t=0$.}
The map $t\mapsto \rho_\pi(u_t)=\rho_\pi(\exp(tB))$ is smooth (indeed analytic), so we may differentiate
\eqref{eq:fixed_by_one_parameter} at $t=0$:
\[
0=\frac{d}{dt}\Big|_{t=0}\Bigl(\widehat{F}(\pi)\,\rho_\pi(\exp(tB))\Bigr)
=\widehat{F}(\pi)\,\frac{d}{dt}\Big|_{t=0}\rho_\pi(\exp(tB))
=\widehat{F}(\pi)\,d\rho_\pi(B),
\]
where $d\rho_\pi(B):=\frac{d}{dt}\big|_{t=0}\rho_\pi(\exp(tB))$ is the derived (Lie algebra) representation.

Finally, since $\rho_\pi$ is unitary, $d\rho_\pi(B)$ is skew-Hermitian:
\[
\rho_\pi(\exp(tB))^\ast \rho_\pi(\exp(tB))=I
\ \Rightarrow\
\frac{d}{dt}\Big|_{t=0}\bigl(\rho_\pi(\exp(tB))^\ast \rho_\pi(\exp(tB))\bigr)=0
\ \Rightarrow\
d\rho_\pi(B)^\ast + d\rho_\pi(B)=0.
\]
Hence $d\rho_\pi(B)^\ast=-d\rho_\pi(B)$, and the identity $\widehat{F}(\pi)\,d\rho_\pi(B)=0$
is equivalent to
\[
\widehat{F}(\pi)\,d\rho_\pi(B)^\ast=0,
\]
which is exactly \eqref{eq:left_infinitesimal_condition_main}.

\paragraph{Step 4: Invertible case.}
If $d\rho_\pi(B)$ is invertible, then multiplying $\widehat{F}(\pi)\,d\rho_\pi(B)^\ast=0$
on the right by $(d\rho_\pi(B)^\ast)^{-1}$ gives $\widehat{F}(\pi)=0$.

This holds for every $\pi\in\widehat{G}$, completing the proof.
\end{proof}

\corTorusResonance*
\begin{proof}
We apply Proposition~\ref{prop:spectral_left_invariance} with $G=\mathbb{T}^r$, whose irreducible unitary representations are the characters $\rho_\pi=\chi_m$.
Left-invariance along $\exp(tB)$ means $F(\exp(tB)g)=F(g)$ for all $t$, hence Proposition~\ref{prop:spectral_left_invariance} gives
\[
\widehat{F}(m)\, d\chi_m(B)^\ast = 0.
\]
It remains to compute $d\chi_m(B)$. In the aligned coordinates, $\exp(tB)$ corresponds to torus angles $\theta(t)=(\lambda_1 t,\dots,\lambda_r t)$, so
\[
\chi_m(\exp(tB)) = \chi_m(\theta(t)) = \exp\!\big(i\langle m,\lambda\rangle t\big).
\]
Differentiating at $t=0$ yields
\begin{align*}
    d\chi_m(B) = \left.\frac{d}{dt}\chi_m(\exp(tB))\right|_{t=0}
&= i\langle m,\lambda\rangle, \\
d\chi_m(B)^\ast &= -i\langle m,\lambda\rangle.
\end{align*}
Substituting into the spectral constraint gives
\[
\widehat{F}(m)\,(-i\langle m,\lambda\rangle)=0
\quad\Longleftrightarrow\quad
\widehat{F}(m)\,\langle m,\lambda\rangle=0,
\]
Thus, nonzero Fourier coefficients $\widehat{F}(m)$ occur, only if $\langle m,\lambda\rangle=0$. 
\end{proof}

\section{Experiments}

\subsection{Training Protocol}
\label{sup_sec:training_protocol}


\paragraph{Datasets.}
For the double pendulum experiment, we use $32{,}000$ samples.
For the Top Quark Tagging task, we use $64{,}000$ samples.
In both cases, we follow the same training protocol described below.

\paragraph{Model architecture.}
The predictor $\phi_w$ is implemented as a multilayer perceptron (MLP) with three hidden layers and ReLU activations.
Unless otherwise stated, the spectral and radial features described in Section~5 are concatenated and fed into this network.

\paragraph{Optimization.}
All models are trained for $40$ epochs using the Adam optimizer with learning rate $2\times 10^{-3}$.
Mini-batch training is used throughout.

\paragraph{Resonance regularization schedule.}
To stabilize training, we employ a warm-up phase for the resonance regularization coefficient $\mu$ in the objective~\eqref{eq:final_objective}.
For the first $10$ epochs, $\mu$ is set to $0.1$.
After the warm-up period, $\mu$ is gradually increased during training up to a maximum scaling factor of $2.0$.
This schedule allows the predictor to first learn a reasonable fit before strongly enforcing the spectral resonance constraints.

\paragraph{Model selection.}
Unless otherwise specified, we select the best checkpoint based on validation performance and report the mean and standard deviation across multiple random seeds.

\begin{table}[htb!]
\centering
\caption{Summary of experimental settings used across all tasks.}
\label{tab:exp_settings}
\begin{tabular}{ll}
\toprule
\textbf{Component} & \textbf{Configuration} \\
\midrule
Double Pendulum samples & 32{,}000 \\
Top Tagging samples & 64{,}000 \\
Epochs & 40 \\
Optimizer & Adam \\
Learning rate & $2\times 10^{-3}$ \\
Predictor $\phi_w$ & MLP (3 hidden layers, ReLU) \\
Warm-up epochs & 10 \\
Initial resonance weight $\mu$ & 0.1 \\
Maximum $\mu$ scaling & 2.0 \\
Model selection & Best validation checkpoint \\
\bottomrule
\end{tabular}
\end{table}

\subsection{Evaluation Metrics}
\label{sup_sec:metrics}

We report the following metrics:

\paragraph{(M1) Test MSE.}
Mean squared error evaluated on the held-out test set.

\paragraph{(M2) Invariance Error.}
When the ground-truth generator $X_{\mathrm{gt}}$ is available, we measure invariance by sampling transformations 
$R(t)=\exp(tX_{\mathrm{gt}})$ and computing
\[
\mathrm{InvErr}
=
\mathbb{E}_{x,t}
\big[
\|f_\theta(x)-f_\theta(R(t)x)\|_2^2
\big].
\]

\paragraph{(M3) Generator Recovery.}
We measure generator recovery using cosine similarity between the learned generator matrix $X_\theta$ and the ground-truth generator $X_{\mathrm{gt}}$:
\[
\mathrm{CosSim}(X_\theta,X_{\mathrm{gt}})
=
\frac{
\langle \mathrm{vec}(X_\theta), \mathrm{vec}(X_{\mathrm{gt}})\rangle
}{
\|\mathrm{vec}(X_\theta)\|_2
\|\mathrm{vec}(X_{\mathrm{gt}})\|_2 + \epsilon
}.
\]

\section{Sensitivity Analysis}
\label{sup_sec:sensitivity_analysis}

\begin{figure*}[t]
\centering
\includegraphics[width=\textwidth]{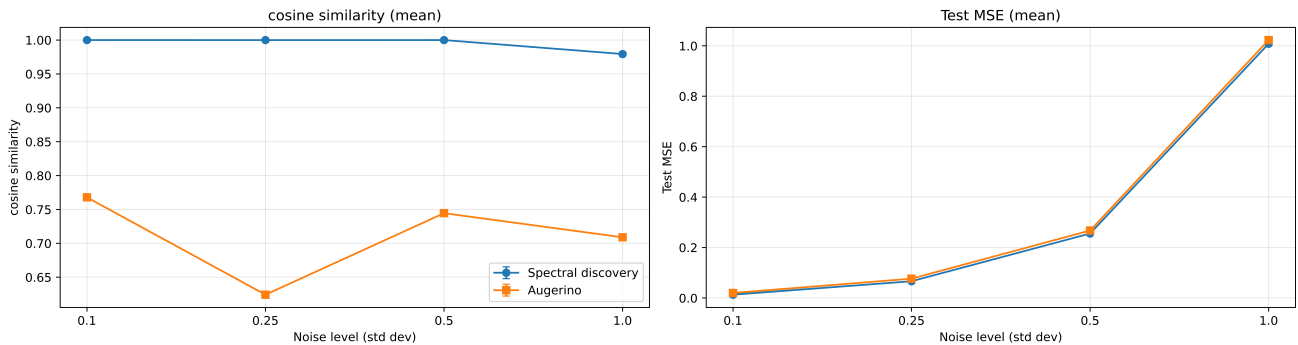}
\caption{
\textbf{Robustness to noise (noise sweep).}
Performance comparison between Spectral Discovery and Augerino on the 6D double pendulum task across increasing additive noise levels.
Left: Mean cosine similarity between the learned and ground-truth symmetry generators.
Right: Mean test loss.
Spectral Discovery maintains near-perfect generator recovery under low and moderate noise and degrades gracefully at high noise, while Augerino exhibits lower alignment and higher variability.
}
\label{fig:noise_sweep}
\end{figure*}

\begin{figure*}[t]
\centering
\includegraphics[width=\textwidth]{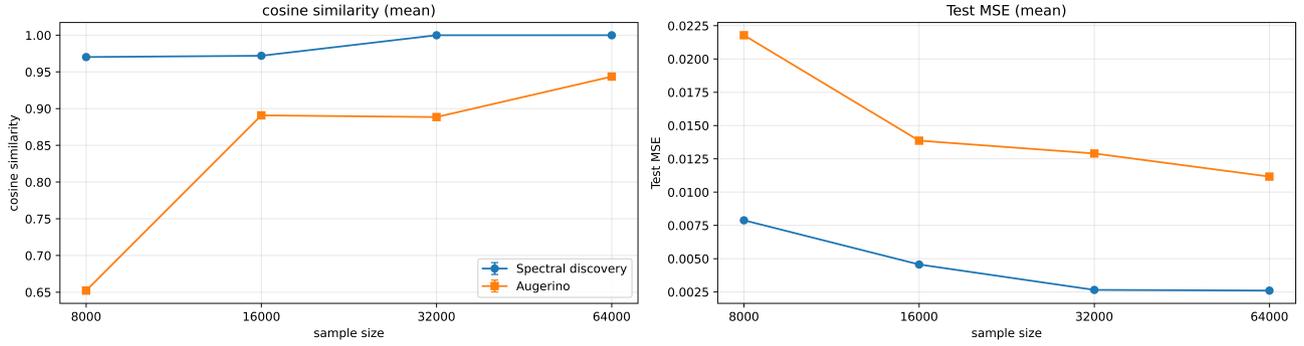}
\caption{
\textbf{Sample efficiency (sample sweep).}
Performance comparison between Spectral Discovery and Augerino as the number of training samples increases.
Left: Mean cosine similarity between the learned and ground-truth symmetry generators.
Right: Mean test mean-squared error.
Spectral Discovery achieves near-perfect symmetry recovery with fewer samples and consistently lower predictive error, while Augerino improves more gradually with data.
}
\label{fig:sample_sweep}
\end{figure*}

We compare Spectral Discovery and Augerino on the 6D double pendulum system, evaluating:
(i) recovery of the latent one-parameter symmetry generator via cosine similarity, and
(ii) predictive accuracy via test loss.

\paragraph{Noise robustness.}
Figure~\ref{fig:noise_sweep} presents performance as Gaussian noise increases from $0.1$ to $1.0$.
Spectral Discovery maintains near-perfect cosine similarity for low and moderate noise levels and exhibits only moderate degradation at the highest noise level.
In contrast, Augerino consistently recovers a less aligned generator across all noise levels.

Both methods experience increasing test loss as noise grows, as expected.
However, Spectral Discovery preserves substantially stronger symmetry alignment throughout the entire noise regime.
This indicates that frequency-based symmetry identification remains stable even when the signal-to-noise ratio decreases.

\paragraph{Sample efficiency.}
Figure~\ref{fig:sample_sweep} evaluates performance as the number of training samples increases from $8$k to $64$k.
Spectral Discovery achieves near-perfect generator recovery even at moderate sample sizes and maintains this alignment as more data are provided.
Augerino improves with additional samples but remains less accurate in identifying the underlying subgroup direction.

In terms of predictive performance, Spectral Discovery consistently attains lower test error across all sample sizes.
This suggests that explicitly exploiting spectral structure leads to more efficient use of available data.

\paragraph{Overall interpretation.}
Across both sweeps, Spectral Discovery demonstrates:
\begin{itemize}
    \item stronger recovery of the latent symmetry generator,
    \item improved robustness to noise, and
    \item better sample efficiency in predictive learning.
\end{itemize}

These findings support the claim that structured spectral analysis provides a principled mechanism for automatic discovery of continuous one-parameter symmetries, enabling reliable structural identification alongside strong predictive performance.

\subsection{Comparison with LieGAN}
\label{sup_sec:comp_lieGAN}


\begin{figure}[t]
\centering
\includegraphics[width=0.6\linewidth]{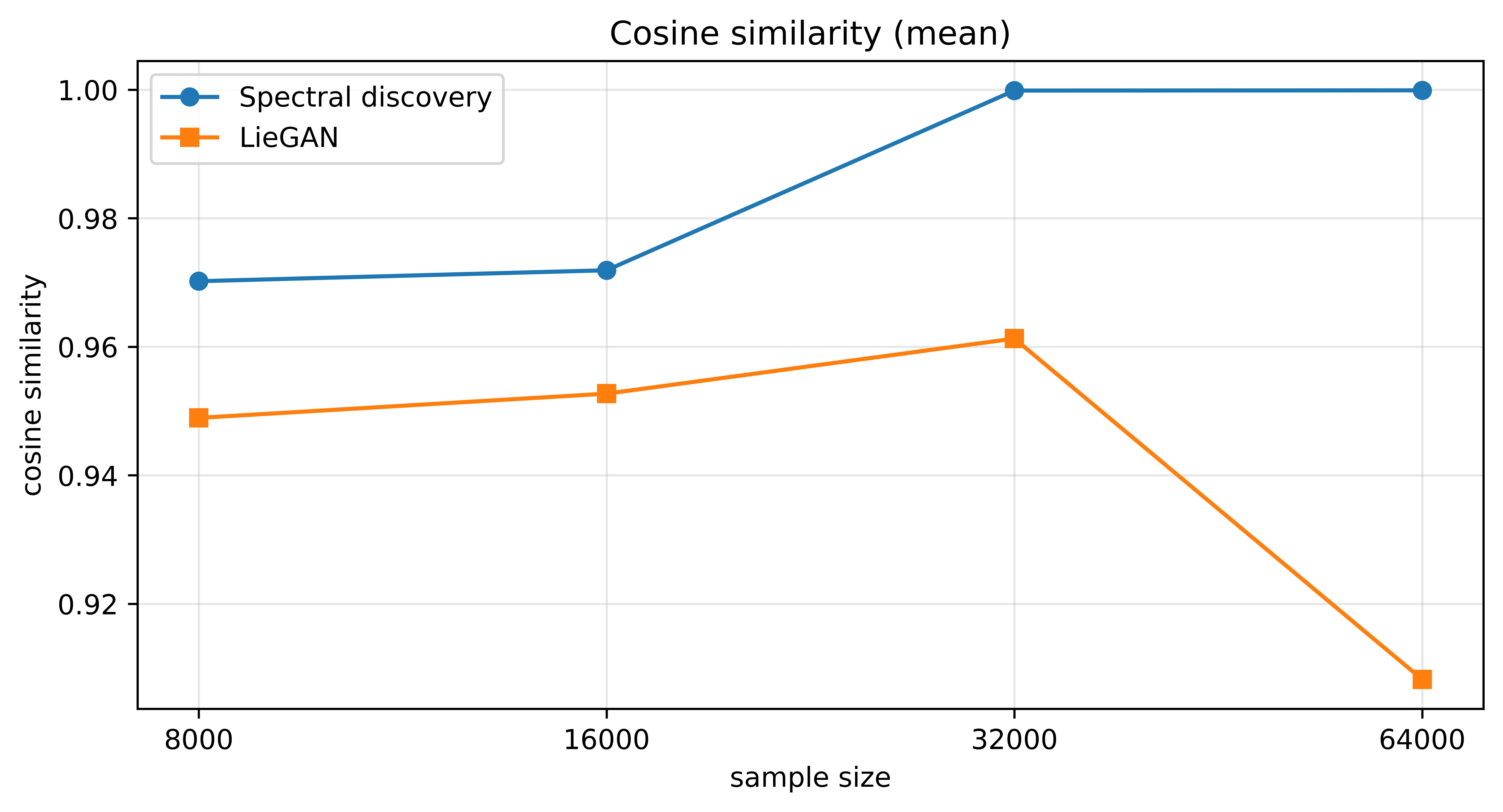}
\caption{
\textbf{LieGAN comparison: sample sweep (mean cosine similarity).}
Mean cosine similarity between the recovered and ground-truth symmetry generators on the 6D double pendulum task as the number of training samples increases.
Spectral Discovery achieves higher generator alignment across all sample sizes, reaching near-perfect recovery at larger sample regimes.
}
\label{fig:liegan_sample_sweep}
\end{figure}

\begin{figure}[t]
\centering
\includegraphics[width=0.6\linewidth]{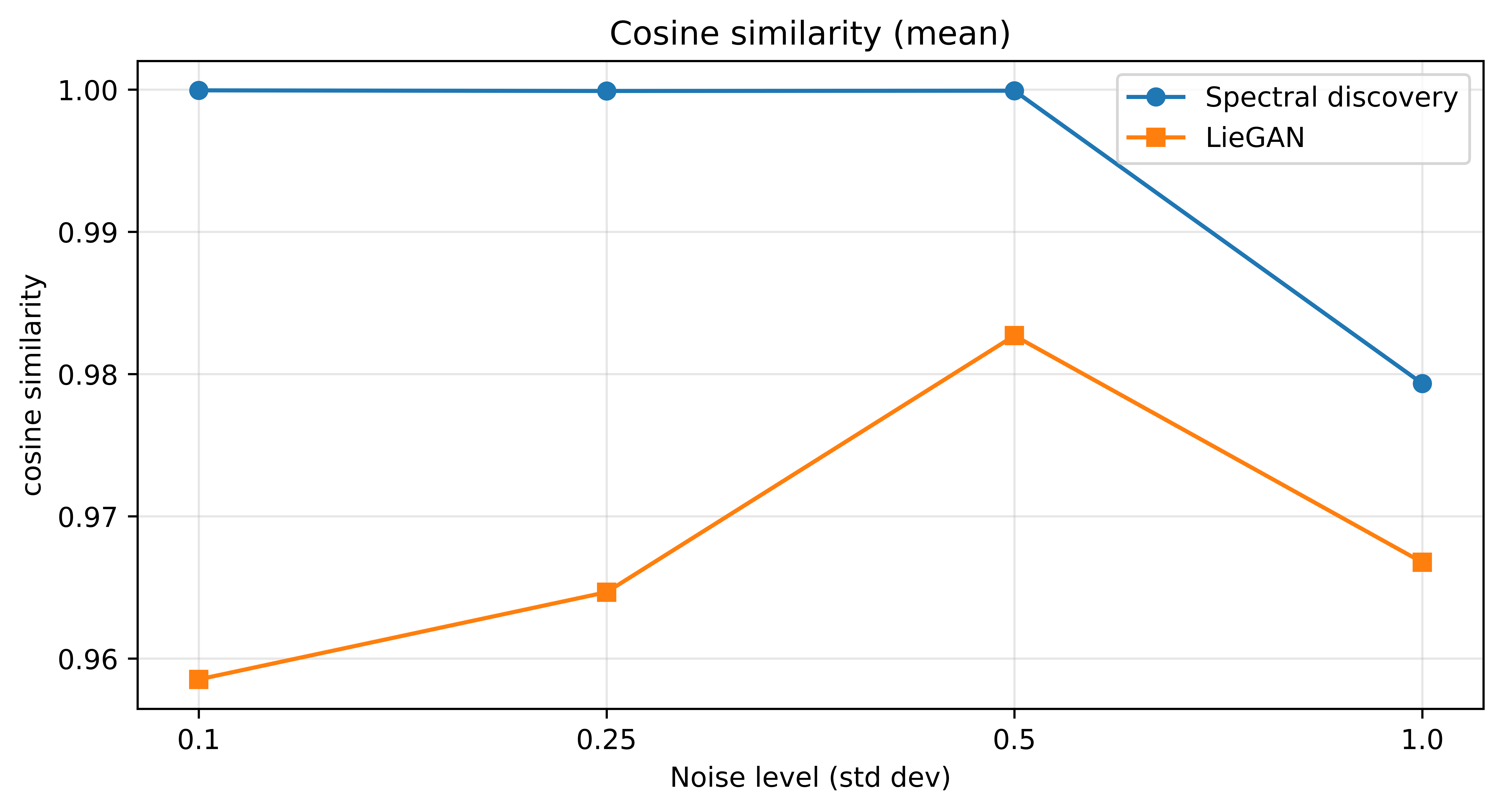}
\caption{
\textbf{LieGAN comparison: noise sweep (mean cosine similarity).}
Mean cosine similarity between the recovered and ground-truth symmetry generators on the 6D double pendulum task as additive noise increases.
Spectral Discovery maintains near-perfect alignment for low and moderate noise and degrades gracefully at high noise, while LieGAN exhibits lower alignment throughout.
}
\label{fig:liegan_noise_sweep}
\end{figure}

We compare Spectral Discovery with LieGAN on the 6D double pendulum benchmark, focusing exclusively on recovery of the latent one-parameter symmetry generator. Since LieGAN is designed purely for symmetry discovery and does not include a downstream predictive model, we report only mean cosine similarity between the recovered and ground-truth generators (please refer Table ~\ref{tab:comparison} for further details).

\paragraph{Sample efficiency.}
Figure~\ref{fig:liegan_sample_sweep} shows generator recovery as the number of training samples increases.
Spectral Discovery consistently achieves higher cosine similarity across all sample sizes and approaches near-perfect alignment in larger data regimes.
LieGAN improves with more samples but remains less aligned, particularly at higher sample counts.
This indicates that exploiting structured spectral concentration enables more accurate identification of the subgroup direction than adversarially learning generators from data.

\paragraph{Noise robustness.}
Figure~\ref{fig:liegan_noise_sweep} evaluates robustness under increasing additive noise.
Spectral Discovery maintains near-perfect alignment for low and moderate noise levels and degrades gracefully at high noise.
LieGAN also improves up to moderate noise but exhibits lower alignment overall and a more noticeable decline at the highest noise level.
These results suggest that frequency-based symmetry identification retains structural information more reliably under perturbations.

\paragraph{Overall interpretation.}
Across both sweeps, Spectral Discovery demonstrates stronger recovery of the latent symmetry generator than LieGAN.
Because LieGAN does not learn a predictive mapping, this comparison isolates the symmetry identification component.
The results support the central premise of our method: leveraging representation-theoretic structure and spectral concentration provides a principled and reliable mechanism for automatic discovery of continuous one-parameter symmetries.

\end{document}